\newcommand\citet[1]{\citeauthor{#1}~[\citeyear{#1}]}
\newcommand\m[1]{\ensuremath{\mathcal{#1}}}
\newcommand\ignore[1]{}
\newcommand\voc{\ensuremath{\Sigma}\xspace}
\newtheorem{thm}{Theorem}
\newtheorem{definition}[thm]{Definition}
\newtheorem{proposition}[thm]{Proposition} 
\newtheorem{theorem}[thm]{Theorem}
\newtheorem{ex}{Example}
\newtheorem{example}[ex]{Example}
\newcommand\muses[1]{\ensuremath{\mathit{MUSs}(#1)}\xspace}
\newcommand\mcses[1]{\ensuremath{\mathit{MCSs}(#1)}\xspace}
\newcommand\setstohit{\ensuremath{\m{H}}\xspace}
\newcommand\F{\ensuremath{\m{F} }\xspace}
\newcommand\Iend{\ensuremath{I_\mathit{end} }\xspace}
\newcommand\formula{\ensuremath{\m{F} }\xspace}
\newcommand\formulac{\ensuremath{\m{C} }\xspace}
\newcommand\mm[1]{\ensuremath{#1}\xspace}
\newcommand\nat{\mm{\mathbb{N}}}
\newcommand\satsets{\mm{\mathbf{SSs}}}
\newcommand\call[1]{\mm{\textsc{#1}}}
\newcommand\cohs{\mm{\call{CondOptHittingSet}}}
\newcommand\sat{\mm{\call{sat}}}
\newcommand\grow{\mm{\call{Grow}}}
\newcommand\omus{\mm{\call{OUS}}}
\newcommand\comus{\mm{\call{OCUS}}}
\newcommand\hitsetbased{hitting set--based\xspace} 
\newcommand\negset[1]{\mm{\overline{#1}}}
\newcommand\maxsat{MaxSAT\xspace}
\title{Efficiently Explaining CSPs with Unsatisfiable Subset Optimization}
\author{
	Emilio Gamba$^1$ \and Bart Bogaerts$^1$ \And Tias Guns$^{1,2}$
	\affiliations
	$^1$Vrije Universiteit Brussel, Belgium\\
	$^2$KU Leuven, Belgium\\
	\emails
	emilio.gamba@vub.be, bart.bogaerts@vub.be,
	tias.guns@kuleuven.be
}
\newcommand\setcitation[2]{%
  \csdef{mycommoncitation#1}{#2}}
\newcommand\getcitation[1]{%
  \csuse{mycommoncitation#1}}
\newcommand\mycite[1]{%
      \ifcsname mycommoncitation#1\endcsname%
   \cite{\getcitation{#1}}%
  \else%
    \cite{#1}%
  \fi%
}	
\begin{document}
 
\maketitle

\begin{abstract}
We build on a recently proposed method for explaining solutions of constraint satisfaction problems.
An explanation here is a \textit{sequence} of simple inference steps, where the simplicity of an inference step is measured by the number and types of constraints and facts used, and where the sequence explains all logical consequences of the problem. 
We build on these formal foundations and tackle two emerging questions, namely how to generate explanations that are provably optimal (with respect to the given cost metric) and how to generate them efficiently. 
To answer these questions, we develop 1) an implicit hitting set algorithm for finding \textit{optimal} unsatisfiable subsets; 2) a method to reduce multiple calls for (optimal) unsatisfiable subsets to a single call that takes \emph{constraints} on the subset into account, and 3) a method for re-using relevant information over multiple calls to these algorithms. 
The method is also applicable to other problems that require finding cost-optimal unsatisfiable subsets.
We specifically show that this approach can be used to effectively find sequences of \textit{optimal} explanation steps for constraint satisfaction problems like logic grid puzzles.
\end{abstract}

\section{Introduction}

Building on old ideas to explain domain-specific propagations performed by constraint solvers  \cite{sqalli1996inference,freuder2001explanation}, we recently introduced a 
method that takes as input a satisfiable constraint program and explains the solution-finding process in a human-understandable way  \cite{ecai/BogaertsGCG20}. 
Explanations in that work are sequences of simple inference steps, involving as few constraints and facts as possible. 
The explanation-generation algorithms presented in that work rely heavily on calls for  \emph{Minimal Unsatisfiable Subsets} (MUS) \cite{marques2010minimal} of a derived program, exploiting a one-to-one correspondence between so-called \emph{non-redundant explanations} and MUSs.
The explanation steps in the seminal work are heuristically optimized with respect to a given cost function that should approximate human-understandability, e.g., taking the number of constraints and facts into account, as well as a valuation of their complexity (or priority). 
The algorithm developed in that work has two main weaknesses: first, it provides no guarantees on the quality of the produced explanations due to internally relying on the computation of $\subseteq$-minimal unsatisfiable subsets, which are often suboptimal with respect to the given cost function. 
Secondly, it suffers from performance problems: the lack of optimality is partly overcome by calling a MUS algorithm on increasingly larger subsets of constraints for each candidate implied fact.
However, using multiple MUS calls per literal in each iterations quickly causes efficiency problems, causing the explanation generation process to take several hours.

Motivated by these observations, we develop algorithms that aid explaining CSPs and improve the state-of-the-art in the following ways: 
\begin{itemize}
 \item We develop algorithms that compute (cost-)\textbf{Optimal} Unsatisfiable Subsets (from now on called OUSs) based on the well-known hitting-set duality that is also used for computing cardinality-minimal MUSs \cite{ignatiev2015smallest,DBLP:conf/kr/SaikkoWJ16}.
\item We observe that many of the individual calls for MUSs (or OUSs) can actually be replaced by a single call that searches for an optimal unsatisfiable subset \textbf{among subsets satisfying certain structural constraints}. In other words, we introduce the \emph{Optimal \textbf{Constrained} Unsatisfiable Subsets (OCUS)} problem and we show how $O(n^2)$ calls to MUS/OUS can be replaced by $O(n)$ calls to an OCUS oracle, where $n$ denotes the number of facts to explain. 
\item Finally, we develop techniques for \textbf{optimizing} the O(C)US algorithms further, exploiting domain-specific information coming from the fact that we are in the  \emph{explanation-generation context}. One such optimization is the development of methods for \textbf{information re-use} between consecutive OCUS calls.
\end{itemize}

In this paper, we apply our OCUS algorithms to generate \emph{step-wise} explanations of satisfaction problems. However, MUSs have been used in a variety of contexts, and in particular lie at the foundations of several explanation techniques \cite{junker2001quickxplain,ignatiev2019abduction,schotten}. We conjecture that OCUS can also prove useful in those settings, to take more fine-grained control over which MUSs, and eventually, which explanations are produced.

The rest of this paper is structured as follows.
We discuss background on the hitting-set duality in \cref{sec:background}. \cref{sec:motviation} motivates our work, while \cref{sec:ocus} introduces the OCUS problem and a generic \hitsetbased algorithm for computing OCUSs. In \cref{sec:ocusEx} we show how to optimize this computation in the context of explanations and in  
\cref{sec:experiments}  we experimentally validate the approach.
We discuss related work in  \cref{sec:related} and conclude in \cref{sec:conclusion}.

\section{Background}\label{sec:backgr}\label{sec:background}
We
present all methods using propositional logic but our results easily generalize to richer languages, such as constraint languages, as long as the semantics is given in terms of a satisfaction relation  between expressions in the language and possible states of affairs (assignments of values to variables).
 
%
%
%

Let \voc be a set of propositional symbols, also called \emph{atoms}; this set is implicit in the rest of the paper. A \emph{literal} is an atom $p$ or its negation $\lnot p$. 
A clause is a disjunction of literals. A formula $\formula$ is a conjunction of clauses. 
Slightly abusing notation, a clause is also viewed as a set of literals and a formula as a set of clauses. We use the term clause and constraint interchangeably.
A (partial) interpretation is a consistent (not containing both $p$ and $\lnot p$) set of literals. 
Satisfaction of a formula \formula by an interpretation is defined as usual~\cite{faia/2009-185}.
A \emph{model} of \formula is an interpretation that satisfies \formula; 
$\formula$ is said to be \emph{unsatisfiable} if it has no models.
A literal $l$ is a \emph{consequence} of a formula \formula if $l$ holds in all $\formula$'s models. 
If $I$ is a set of literals, we write \negset{I} for the set of literals $\{\lnot l\mid l\in I\}$.

\begin{definition}

  A \emph{Minimal Unsatisfiable Subset} (MUS) of 
  \F is an unsatisfiable subset $\m{S}$ of $\F$ for which every strict subset of $\m{S} $ is satisfiable. 
%
  \muses{\F} denotes the set of MUSs of \F. 
\end{definition}


\begin{definition}
    A set $\m{S} \subseteq \formula$ is a \emph{Maximal Satisfiable Subset} (MSS) of $ \formula$ if $\m{S}$ is satisfiable and for all $\m{S}'$ with $\m{S}  \subsetneq  \m{S}'\subseteq\formula $, $\m{S}'$ is unsatisfiable.
\end{definition}

\begin{definition}
    A set $\m{S} \subseteq \formula$ is a \emph{correction subset} of \formula if $\formula\setminus\m{S}$ is satisfiable. 
    Such a \m{S} is a \emph{minimal correction subset} (MCS)  of \formula if no strict subset of \m{S} is also a correction subset. 
    \mcses{\F} denotes the set of MCSs of \F. 
\end{definition}
%
Each  MCS of \formula is the complement of an MSS of \formula and vice versa. 

\begin{definition}\label{def:minimal-hs}
    Given a collection of sets $\m{H}$, a hitting set of $\m{H}$ is a set $h$ such that  $h \cap C \neq \emptyset$ for every $C \in \m{H}$. A hitting set is \emph{minimal} if no strict subset of it is also a hitting set.
\end{definition}



The next proposition is the well-known hitting set duality \cite{DBLP:journals/jar/LiffitonS08,ai/Reiter87}  between MCSs and MUSs that forms the basis of our algorithms, as well as algorithms to compute MSSs \cite{DBLP:conf/sat/DaviesB13} and \emph{cardinality-minimal} MUSs \cite{ignatiev2015smallest}.

\begin{proposition}\label{prop:MCS-MUS-hittingset}
%
    A set  $\m{S} \subseteq \formula$ is an MCS of $ \formula$ iff  it is a \emph{minimal hitting set} of \muses{\formula}.
%
    A set  $\m{S} \subseteq \formula$ is a MUS of $ \formula$ iff  it is a \emph{minimal hitting set} of \mcses{\formula}.
\end{proposition}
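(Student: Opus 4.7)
The plan is to prove both equivalences by first establishing two supporting characterizations of (un)satisfiability for subsets $X \subseteq \formula$: (i) $X$ is unsatisfiable iff $X$ contains some MUS of $\formula$, and (ii) $X$ is satisfiable iff $X$ is disjoint from some MCS of $\formula$. Both are standard ``shrink/extend'' arguments. For (i), the forward direction removes elements from $X$ one at a time while preserving unsatisfiability, until no further element can be removed, producing a MUS contained in $X$; the backward direction uses that any superset of an unsatisfiable set remains unsatisfiable. For (ii), the forward direction extends a satisfiable $X$ to a maximal satisfiable subset $\m{M}$ of $\formula$, so that $\formula \setminus \m{M}$ is an MCS disjoint from $X$; the backward direction uses that any subset of a satisfiable set is satisfiable.

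Equipped with (i), I obtain the first equivalence as a chain: $\m{S}$ is a correction subset iff $\formula \setminus \m{S}$ is satisfiable iff (contrapositive of (i)) $\formula \setminus \m{S}$ contains no MUS iff $\m{S}$ intersects every MUS, i.e.\ $\m{S}$ is a hitting set of $\muses{\formula}$. Both the property ``being a correction subset'' and the property ``being a hitting set of $\muses{\formula}$'' are upward-closed under taking supersets, so since these properties coincide they share the same $\subseteq$-minimal elements, yielding that $\m{S}$ is an MCS iff $\m{S}$ is a minimal hitting set of $\muses{\formula}$.

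The second equivalence follows analogously from (ii). Its contrapositive reads: $X \subseteq \formula$ is unsatisfiable iff $X$ intersects every MCS, i.e.\ $X$ is a hitting set of $\mcses{\formula}$. The definition of a MUS then lines up exactly with that of a minimal hitting set of $\mcses{\formula}$: $\m{S}$ is unsatisfiable with every strict subset satisfiable iff $\m{S}$ hits every MCS and no strict subset of $\m{S}$ hits all MCSs.

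The main obstacle is conceptual rather than technical: one must ensure that (i) and (ii) are stated relative to MUSs and MCSs of the \emph{fixed ambient} formula $\formula$, rather than of the varying subset $X$. This is handled by observing that a subset-minimal unsatisfiable set inside $X \subseteq \formula$ is automatically a MUS of $\formula$, and a superset-maximal satisfiable set containing $X$ (within $\formula$) is automatically an MSS of $\formula$; both facts follow directly from the definitions, since MUS and MSS status depend only on satisfiability and on the ambient set $\formula$, not on $X$.
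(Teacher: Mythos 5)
Your proof is correct. Note that the paper does not actually prove this proposition --- it states it as the well-known hitting-set duality and cites Reiter (1987) and Liffiton--Sakallah (2008) --- so there is no in-paper argument to compare against; your write-up is the standard proof from that literature. The two auxiliary characterizations (unsatisfiable iff containing a MUS of $\formula$; satisfiable iff disjoint from some MCS of $\formula$), the reduction of both equivalences to the fact that two upward-closed families with the same members share the same minimal elements, and the explicit check that minimal unsatisfiable subsets of $X\subseteq\formula$ are MUSs of $\formula$ itself are all exactly the right ingredients, and the shrink/extend steps are unproblematic here since $\formula$ is a finite set of clauses.
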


\section{Motivation}\label{sec:motivation}\label{sec:motviation}
Our work is motivated by the problem of explaining satisfaction problems through a sequence of simple explanation steps. This can be used to teach people problem-solving skills, to compare the difficulty of related satisfaction problems (through the number and complexity of steps needed), and in human-computer solving assistants.

Our original explanation generation algorithm \cite{ecai/BogaertsGCG20} starts from a formula $\formulac$ (in the application coming from a high level CSP), a partial interpretation $I$ (here also viewed as a conjunction of literals) and a cost function $f$ quantifying the difficulty of an explanation step, by means of a weight for every clause and literal in \formula. 

\newcommand\onestep{\ensuremath{\call{explain-One-Step}}\xspace}

\begin{algorithm}[t]
  \DontPrintSemicolon
  
  \caption{$\onestep(\formulac,f,I,\Iend)$}
  \label{alg:oneStep}
$X_{best} \gets \mathit{nil}$\;
\For{$l \in \{\Iend \setminus I\}$}{
    $X \gets \call{MUS}{(\formulac \land I \land \neg l)}$\;
    \If{$f(X)<f(X_{best})$\label{alg:oneStep:ifcheck}}{
        $X_{best} \gets X$\;
    }
}
\Return{$X_{best}$} 
\end{algorithm}

The goal is to find a sequence of \textit{simple} explanation steps, where the simplicity of a step is measured by the total cost of the elements used in the explanation.
An explanation step is an implication $I' \wedge \formulac' \implies N$ where $I'$ is a subset of already derived literals, $\formulac'$ is a subset of constraints of the input formula $\formulac$, and $N$ is a set of literals entailed by $I'$ and $\formulac'$ which are not yet explained.

%
The key part of the algorithm is the search for the next best explanation, given an interpretation $I$ derived so far. 
\cref{alg:oneStep} shows the gist of how this was done.
It takes as input the formula \formulac, a cost function $f$ quantifying the quality of explanations, an interpretation $I$ containing all already derived literals in the sequence so far, and the interpretation-to-explain $\Iend$. 
To compute an explanation, this procedure iterates over the literals that are still to explain, computes for each of them an associated MUS and subsequently selects the lowest cost one from found MUSs.
The reason this works is because there is a one-to-one correspondence between MUSs of $\formulac \land I \land \neg l$ and so-called \emph{non-redundant explanation} of $l$ in terms of (subsets of) $\formulac$ and $I$~\cite{ecai/BogaertsGCG20}. 

Experiments have shown that such a MUS-based approach can easily take hours, especially when multiple MUS calls are performed to increase the chance of finding a good MUS, and hence that algorithmic improvements are needed to make it more practical. 
We see three main points of improvement, all of which will be tackled by our generic OCUS algorithm presented in the next section. 
\begin{itemize}
 \item First of all, since the algorithm is based on \call{MUS} calls, there is no guarantee that the explanation found is indeed optimal 
 (with respect to the given cost function). 
 Performing multiple MUS calls is only a heuristic that is used to circumvent the restriction that \textit{there are no algorithms for cost-based unsatisfiable subset \textbf{optimization}}. 
 \item Second, this algorithm uses \call{MUS} calls for every literal to explain separately. The goal of all these calls is to find a single unsatisfiable subset of $\formulac \land I \land \overline{(\Iend\setminus I)}$ that contains exactly one literal from $\overline{(\Iend\setminus I)}$. This begs the questions whether it is possible \textit{to compute a single (optimal) unsatisfiable subset \textbf{subject to constraints}}, where in our case, the constraint is to include exactly one literal from $\overline{(\Iend\setminus I)}$. 
 \item Finally, the algorithm that computes an entire explanation sequence makes use of repeated calls to \onestep and hence will solve many similar problems. This raises the issue of \textit{\textbf{incrementality}: can we re-use the computed data structures to achieve speed-ups in later calls?}
\end{itemize}

\section{Optimal Constrained Unsatisfiable Subsets} \label{sec:ocus}
The first two considerations from the previous section lead to the following definition. 

\begin{definition}
   Let $\formula$ be a formula, $f:2^{\formula} \to \nat$ a cost function and  $p$ a predicate $p: 2^{\formula}\to \{true,false\}$.  We call 
    $\m{S} \subseteq \formula$ an OCUS of \formula (with respect to $f$ and $p$) if \begin{itemize}                                      
      \item $\m{S}$ is unsatisfiable,
      \item $p(\m{S})$ is true
      \item all other unsatisfiable $\m{S}'\subseteq \formula$ for which $p(\m{S}')$ is true satisfy $f(\m{S}')\geq f(\m{S})$.
    \end{itemize}
\end{definition}

If we assume that the predicate $p$ is specified itself as a CNF over (meta)-variables indicating inclusion of clauses of $\m{F}$, and $f$ is obtained by assigning a weight to each such meta-variable, then the complexity of the problem of finding an OCUS is the same as that of the SMUS (cardinality-minimal MUS) problem  \cite{ignatiev2015smallest}: the associated decision problem is $\Sigma^P_2$-complete. Hardness follows from the fact that SMUS is a special case of OCUS, containment follows - intuitively - from the fact that this can be encoded as an $\exists\forall$-QBF using a Boolean circuit encoding of the costs. 

When considering the procedure \onestep from the perspective of OCUS defined above, the task of the procedure  is to compute an OCUS of the formula $\formula := \formulac\land I\land \overline{\Iend\setminus I}$ with $p$ the predicate that holds for subsets  that contain exactly one literal of $\overline{\Iend\setminus I}$, see \cref{alg:oneStepOCUS}. 

In order to compute an OCUS of a given formula, we propose to build on the hitting set duality of \cref{prop:MCS-MUS-hittingset}. 
For this, we will assume to have access to a solver \cohs that can compute hitting sets of a given collection of sets that are \emph{optimal} (w.r.t.\ a given cost function $f$) among all hitting sets \emph{satisfying a condition $p$}. 
The choice of the underlying hitting set solver will thus determine which types of cost functions and constraints are possible. 
In our implementation, we use a cost function $f$ as well as a condition $p$ that can easily be encoded as linear constraints, thus allowing the use of highly optimized mixed integer programming (MIP) solvers. The \cohs formulation is as follows:
\begin{align*}
\small
  minimize_S \quad & f(S) \\ 
  s.t. \quad & p(S) \\
       & sum(H) \geq 1, \quad &&\forall H \in \setstohit \\
       & s \in \{0,1\}, \quad &&\forall s \in S
\end{align*}
where $S$ is a set of MIP decision variables, one for every clause in $\formula$. In our case, $p$ is expressed as $\sum_{s \in \overline{\Iend\setminus I}} s = 1$. 
%
$f$ is a weighted sum over the variables in $S$. For example, (unit) clauses representing previously derived facts can be given small weights and regular constraints can be given large weights, such that explanations are penalized for including constraints when previously derived facts can be used instead. 
\newcommand\onestepo{\ensuremath{\call{explain-One-Step-ocus}}\xspace}
\begin{algorithm}[t]
  \DontPrintSemicolon
  
  \caption{$\onestepo(\formulac,f,I,\Iend)$}
  \label{alg:oneStepOCUS}
  $p \leftarrow$ exactly one of $\overline{\Iend\setminus I}$\;
  \Return{$\comus(\formulac\land I\land \overline{\Iend\setminus I}, f, p)$} 
\end{algorithm}
\begin{algorithm}[t]
  \DontPrintSemicolon
  $\setstohit  \gets \emptyset$ \; 
  \While{true}{
    $\m{S} \gets \cohs(\setstohit,f,p) $  \;
    \If{ $\lnot \sat(\m{S})$}{\label{alg:ocus-sat-check}
      \Return{$\m{S}$} \;
    }
    $\m{S} \gets  \grow(\m{S},\F) $ \label{line:grow}\;
    $\setstohit  \gets \setstohit  \cup \{  \formula \setminus \m{S}\}$ \;
  }
  \caption{$\comus(\formula,f,p)$ }
  \label{alg:comus}
\end{algorithm}
Our generic algorithm for computing OCUSs is depicted in \cref{alg:comus}. It combines the hitting set-based approach for MUSs of \cite{ignatiev2015smallest} with the use of a MIP solver for (weighted) hitting sets as proposed for maximum satisfiability \cite{DBLP:conf/sat/DaviesB13}. The key novelty is the ability to add structural constraints to the hitting set solver, without impacting the duality principles of \cref{prop:MCS-MUS-hittingset}, as we will show.

Ignoring \cref{line:grow} for a moment, 
the algorithm alternates calls to a hitting set solver with calls to a \sat oracle on a subset $\m{S}$ of $\formula$. 
In case the \sat oracle returns true, i.e., the subset $\m{S}$ is satisfiable, the complement of $\m{S}$ is a correction subset of $\m{F}$ and is added to \setstohit. 

As in the SMUS algorithm of \citet{ignatiev2015smallest}, our algorithm contains an (optional) call to \grow. 
The purpose of the \grow is to expand a satisfiable subset of $\m{F}$ further, to find a smaller correction subset and as such find stronger constraints on the hitting sets. 
In our case, the calls for hitting sets will also take into account the cost ($f$), as well as the meta-level constraints ($p$); as such, it is not clear a priori which properties a good \grow function should have here.
We discuss the different possible implementations of \grow later and evaluate their performance in \cref{sec:experiments}. For correctness of the algorithm, all we need to know is that it returns a satisfiable subset $\m{S}'$ of $\m{F}$ with $\m{S}\subseteq\m{S}'$.
%

Soundness and completeness of the proposal follow from the fact that all sets added to \setstohit are correction subsets, and \cref{thm:soundcomplete}, which states that what is returned is indeed a solution and that a solution will be found if it exists.

\begin{theorem}\label{thm:soundcomplete}
  Let $\m{H}$ be a set of correction subsets of \formula. 
  If $\m{S}$ is a hitting set of \m{H} that is $f$-optimal among the hitting sets of \m{H} satisfying a predicate $p$, and  $\m{S}$ is unsatisfiable, then $\m{S}$ is an OCUS of \formula. 
  
  If  $\m{H}$ has no hitting sets satisfying $p$, then $\formula$ has no OCUSs.
\end{theorem}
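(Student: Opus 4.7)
The plan is to prove both claims by exploiting a simple containment: every unsatisfiable subset of $\formula$ hits every correction subset of $\formula$, and in particular hits every element of $\m{H}$. Once this is established, $f$-optimality transfers from ``hitting sets of $\m{H}$ satisfying $p$'' to ``unsatisfiable subsets of $\formula$ satisfying $p$'' because the former set contains the latter.

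For the first claim, I would proceed in three steps. First, I would verify the easy membership conditions: $\m{S}$ is unsatisfiable by hypothesis, and $\m{S}$ satisfies $p$ because it was selected among hitting sets of $\m{H}$ satisfying $p$. Second, I would prove the key lemma that any unsatisfiable $\m{S}' \subseteq \formula$ is a hitting set of $\m{H}$. Let $C \in \m{H}$ be arbitrary; since $C$ is a correction subset of $\formula$, the set $\formula \setminus C$ is satisfiable. Because $\m{S}'$ is unsatisfiable, $\m{S}' \not\subseteq \formula \setminus C$, which forces $\m{S}' \cap C \neq \emptyset$. Third, I would conclude optimality: take any unsatisfiable $\m{S}' \subseteq \formula$ with $p(\m{S}')$ true. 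By the lemma, $\m{S}'$ is a hitting set of $\m{H}$ satisfying $p$, so by $f$-optimality of $\m{S}$ among such hitting sets, $f(\m{S}') \geq f(\m{S})$. Combining the three steps yields that $\m{S}$ is an OCUS of $\formula$.

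For the second claim, I would argue by contrapositive. Suppose $\formula$ admits an OCUS $\m{S}^*$; then $\m{S}^*$ is an unsatisfiable subset of $\formula$ satisfying $p$, and the lemma above shows $\m{S}^*$ is a hitting set of $\m{H}$ satisfying $p$, contradicting the hypothesis that no such hitting set exists.

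The argument is almost entirely driven by the duality-flavoured observation relating unsatisfiable subsets to hitting sets of correction subsets, so there is no substantial obstacle; the only point that deserves care is to note that the lemma only uses that elements of $\m{H}$ are correction subsets (not necessarily minimal ones, and not necessarily all of them), which is exactly what allows the algorithm to reason with a partial collection $\m{H}$ that grows over iterations while preserving soundness.
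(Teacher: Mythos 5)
Your proof is correct and follows essentially the same route as the paper's: both reduce everything to the observation that any unsatisfiable subset of $\formula$ must hit every correction subset, hence every member of $\m{H}$, so that $f$-optimality transfers from hitting sets of $\m{H}$ satisfying $p$ to unsatisfiable subsets satisfying $p$, and the second claim follows by the same containment. The only (cosmetic) difference is that you prove this key containment directly from the definition of a correction subset, whereas the paper invokes the hitting-set duality of Proposition~1 together with the fact that every correction subset contains a minimal one; your inline argument is, if anything, more self-contained.
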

\begin{proof}
For the first claim, it is clear that $\m{S}$ is unsatisfiable and satisfies $p$. Hence all we need to show is $f$-optimality of $\m{S}$.
  If there would exist some other unsatisfiable subset $\m{S}'$ that satisfies $p$ with $f(\m{S}')\leq f(\m{S})$, we know that $\m{S}'$ would hit every minimal correction set of \m{F}, and hence also every set in \m{H} (since every correction set is the superset of a minimal correction set).
  Since $\m{S}$ is $f$-optimal among hitting sets of $\m{H}$ satisfying $p$ and $\m{S}'$ also hits $\m{H}$ and satisfies $p$, it must thus be that $f(\m{S})=f(\m{S}')$. 

The second claim immediately follows from \cref{prop:MCS-MUS-hittingset} and the fact that an OCUS is an unsatisfiable subset of $\formula$. 
\end{proof}
%

Perhaps surprisingly, correctness of the proposed algorithm does \emph{not} depend on monotonicity properties of $f$ nor $p$. In principle, any (computable) cost function and condition on the unsatisfiable subsets can be used. In practice however, one is bound by limitations of the chosen hitting set solver.

As an illustration, we now provide an example of one call to $\onestepo$ (Algorithm 
\ref{alg:oneStepOCUS}) and the corresponding \comus-call (Algorithm \ref{alg:comus}) in detail: 
\begin{example}
	Let $\formulac$ be a CNF formula over variables $x_1, x_2, x_3$ with the following four clauses:
		\[ c_1 := \lnot x_1 \vee \lnot x_2 \vee x_3 \qquad  c_2 := \lnot x_1 \vee  x_2 \vee x_3\] \[  c_3 := x_1 \qquad c_4 := \lnot x_2 \vee \lnot x_3 \]
	 The final interpretation $\Iend$ is $\{x_1, \lnot x_2,  x_3\}$. Let the current interpretation $I$ be $\{ x_1\}$, then $\overline{\Iend\setminus I} =  \{x_2, \lnot x_3\}$.
	 
	 To define the input for the OCUS call, we add new clauses representing the already known facts $I$ and the to-be-derived facts $\overline{\Iend\setminus I}$: 
	 \[ c_5 := \{x_1\}\qquad  c_6:=\{x_2\} \qquad  c_7 := \{\lnot x_3\}\]
	 The formula \formula in the \comus-call is thus: 
	 \[\formula= \formulac\land I \land \overline{(\Iend\setminus I)} = \{ c_1 \wedge c_2 \wedge c_3\wedge c_4\wedge c_5\wedge c_6\wedge c_7\}\]	
	 	 
	 We define $p\triangleq$ \textit{exactly-one$(c_6, c_7)$} and $f = \sum w_ic_i$ with clause weights $w_1 = 60, w_2=60, w_3=100, w_4=100, w_5=1, w_6=1, w_7=1$.

\setstohit is initialized as the empty set. At each iteration, the hitting set solver will search for a cost-minimal assignment that hits all sets in \setstohit and that furthermore contains exactly one of $c_6$ and $c_7$ (due to $p$).
	Table \ref{tab:explanation-steps-expanded} shows the computed steps in the different iterations of Algorithm~\ref{alg:comus} given the above input.
\begin{table*}[!t]
	\centering
			\begin{tabular}{lcccc} 
				&$\m{S}$ & $\sat(\m{S})$ & \grow($\m{S}$, $\formula$) & $\setstohit  \gets \setstohit  \cup \{  \formula \setminus \m{S}\}$\\ 
				\toprule[2pt]
				1 &$ \emptyset $ & $\mathit{true}$ & $\{c_1, c_2, c_3, c_4, c_5\}$ & $\{ \{c_6, c_7\}\}$   \\
				\midrule	
				2 &$\{c_6\}$ & $\mathit{true}$ & $\{c_1, c_2, c_3, c_5, c_6\}$ & $\{ \{c_6, c_7\}, \{c_4, c_7\}\}$  \\  
				\midrule
				3 &$\{c_7\}$ & $\mathit{true}$ & $\{c_1, c_3, c_4, c_5, c_7\}$ & $\{\{c_6, c_7\}, \{c_4, c_7\}, \{c_2, c_6\}\}$  \\
				\midrule
				\multirow{2}{*}{4}& \multirow{2}{*}{$\{c_2, c_7\}$}  & \multirow{2}{*}{$\mathit{true}$} & \multirow{2}{*}{$\{c_2, c_3, c_4, c_5, c_6, c_7 \}$ } & $\{\{c_6, c_7\}, \{c_4, c_7\}, \{c_2, c_6\}, $  \\ 
				&   & &   & $ \{c_1\}\}$  \\
				\midrule
				\multirow{2}{*}{5}& \multirow{2}{*}{$\{c_1, c_2, c_7\}$}  & \multirow{2}{*}{$\mathit{true}$} & \multirow{2}{*}{$\{c_1, c_2, c_4, c_6, c_7 \}$ } & $\{\{c_6, c_7\}, \{c_4, c_7\}, \{c_2, c_6\}, $  \\ 
				&   & &   & $ \{c_1\}, \{c_3, c_5\}\}$  \\
				\midrule
				6&  $\{ c_1, c_2, c_5, c_7 \}$ & $\mathit{false}$ & & \\
			\end{tabular}
	\caption{Example of an OCUS-explanation computation.}
	\label{tab:explanation-steps-expanded}
\end{table*}
\emph{In this example, the \grow we used is the one called \emph{Max-Actual-Unif} in \cref{sec:experiments}. }
\end{example}

\section{Efficient OCUS Computation for Explanations}\label{sec:ocusEx}
Algorithm~\ref{alg:comus} is generic and can also be used to find (unconstrained) \omus{}s, namely with a trivially true $p$.
However, its constrainedness property allows to remove the need to compute a MUS/\omus for every literal. This decreases the complexity of explanation sequence generation from $O(n^2)$ calls to MUS to $O(n)$ calls to OCUS, namely, once for every step in the sequence. 

We now discuss optimizations to the OCUS algorithm that are specific to explanation sequence generation, though they can also be used when other forms of domain knowledge are present. 
 
\paragraph{Incremental OCUS Computation.}
Inherently, generating a sequence of explanations still requires as many OCUS calls as there are literals to explain. 
Indeed, a greedy sequence construction algorithm 
calls \onestepo iteratively with a growing interpretation $I$ until $I=\Iend$.

All of these calls to \onestepo, and hence OCUS, are done with very similar input (the set of constraints does not change, and the $I$ slowly grows between two calls). For this reason, it makes sense that information computed during one of the earlier stages can be useful in later stages as well. 

The main question is, suppose two \comus calls are done, first with inputs $\formula_1$, $f_1$, and $p_1$, and later with $\formula_2$, $f_2$, and $p_2$; how can we make use as much as possible of the data computations of the first call to speed-up the second call? The answer is surprisingly elegant. The most important data \comus keeps track of  is the collection \setstohit of correction subsets that need to be hit.

This collection in itself is not useful for transfer between two calls, since -- unless we assume that $\formula_2$ is a subset of $\formula_1$, there is no reason to assume that a set in $\setstohit_1$ should also be hit in the second call. 
However, each set $H$ in $\setstohit$ is the complement (with respect to the formula at hand) of a \emph{satisfiable subset} of constraints, and this satisfiability remains true. 
Thus, instead of storing $\setstohit$, we can keep track of a set \satsets of \emph{satisfiable subsets} (the sets $\m{S}$ in the \comus algorithm). 
When a second call to \comus is performed, we can then initialize $\setstohit$ as the complement of each of these satisfiable subsets with respect to $\formula_2$, i.e., \[\setstohit\gets \{\formula_2\setminus \m{S}\mid \m{S}\in \satsets\}.\]

The effect of this is that we \textit{bootstrap} the hitting set solver with an initial set $\setstohit$.

For hitting set solvers that natively implement incrementality, we can generalize this idea further: we know that all calls to $\comus(\formula,f,p)$ will be cast with $\formula \subseteq \m{C}\cup \Iend \cup \overline{\Iend \setminus I_0}$, where $I_0$ is the start interpretation. Since our implementation uses a MIP solver for computing hitting sets (see Section~\ref{sec:backgr}), and we have this upper bound on the set of formulas to be used, we can initialize all relevant decision variables once. To compute the conditional hitting set for a specific $\formulac\cup I\cup \overline{\Iend\setminus I} \subseteq \m{C}\cup \Iend \cup \overline{\Iend \setminus I_0}$ we need to ensure that the MIP solver only uses literals in $\formulac\cup I\cup \overline{\Iend\setminus I}$, for example by giving all other literals infinite weight in the cost function. In this way, the MIP solver will automatically maintain and reuse previously found sets-to-hit in each of its computations. 

\paragraph{Explanations with Bounded OUS.}
Instead of working \comus-based, we can now also generate optimal explanations by replacing the MUS call by an \omus call  in Algorithm~\ref{alg:oneStep} (where OUS is computed as in Algorithm~\ref{alg:comus}, but with a trivially true $p$). 
When doing this, we know that every \comus of cost greater than or equal to $f(X_{\mathit{best}})$ will be discarded by the check on Line 4 of Algorithm~\ref{alg:oneStep}.
As such, a next optimization is to, instead of searching for an OUS, perform a \emph{bounded OUS check}, which only computes an OUS in case one of cost smaller than a given bound $\mathit{ub}$ exists.  
In our specific implementation, bounded \omus is performed by interrupting this \omus-call (after Line 3 Algorithm~\ref{alg:comus}) if $f(\m{S}) > \mathit{ub}$.

Since the bounding on the \omus cost has the most effect if cheap \omus{}s are found early in the loop across the different literals, we keep track of an upper bound of the cost of an OUS for each literal to explain. This is initialized to a value greater than any \omus, e.g., as $f(\formulac \land I_0 \land \overline{\Iend \setminus I_0})$, and is updated every time an \omus explaining that literal is found; when going through the loop in Line 2 of Algorithm~\ref{alg:comus}, we then handle literals in order of increasing upper bounds.

\paragraph{Domain-Specific Implementations of \grow.} \label{para:domainspecificgrow}

The goal of the \grow procedure is to turn $\m{S}$ into a larger subformula of $\formula$. The effect of this is that the complement added to \setstohit will be smaller, and hence, a stronger restriction for the hitting set solver is found.  

Choosing an effective \grow procedure requires finding a difficult balance: on the one hand, we want our subformula to be as large as possible (which ultimately would correspond to computing the maximum satisfiable subformula), 
but on the other hand we also want the procedure to be very efficient as it is called in every iteration. 

For the case of explanations we are in, we make the following observations: 
\begin{itemize}
 \item Our formula at hand (using the notation from the \onestepo algorithm) consists of three types of clauses: \begin{inparaenum}\item  (translations of) the problem constraints (this is \formulac) \item literals representing the assignment found thus far (this is $I$), and \item the negations of literals not-yet-derived (this is $\overline{\Iend\setminus I}$). \end{inparaenum}
 \item $\formulac$ and $I$ together are satisfiable, with assignment $I_{end}$, and \emph{mutually supportive}, by this we mean that making more constraints in \formulac true, more literals in $I$ will automatically become true and vice versa. 
 \item The constraint $p$ enforces that each hitting set will contain \textbf{exactly} one literal of  $\overline{\Iend\setminus I}$
\end{itemize}
Since the restriction on the third type of elements of $\formula$ are already strong, it makes sense to use the \grow(\m{S},\F) procedure to search for a \emph{maximal} satisfiable subset of $\formulac\cup I$ with hard constraints that $\m{S}$ should be satisfied, using a call to an efficient  (partial) \maxsat solver. Furthermore, we can initialize this call as well as any call to a \sat solver with the polarities for all variables set to the value they take in $\Iend$. 

We evaluate different grow strategies in the experiments section, including the use of partial \maxsat as well as weighted partial \maxsat based on the weights in the cost function $f$.

\paragraph{Example 1 (cont.)} Consider line 0 in table \ref{tab:explanation-steps-expanded}. During the \grow procedure, the \maxsat solver \emph{Max-Actual-Unif} with polarities set to \Iend branches when multiple assignment to a literal are possible. By hinting the polarities of the literals, we guide the solver and it assigns all values according to the end interpretation and neither $c_6$ nor $c_7$ is taken.


\section{Experiments}\label{sec:experiments}

We now experimentally validate the performance of the different versions of our algorithm.
Our benchmarks were run on a compute cluster, where each explanation sequence generation was assigned a single core on a 10-core INTEL Xeon Gold 61482 (Skylake) processor, a timelimit of 120 minutes and a memory-limit of 4GB. 
Everything was implemented in Python on top of PySAT\footnote{\url{https://pysathq.github.io}} and is available at \url{https://github.com/ML-KULeuven/ocus-explain}. 
For MIP calls, we used Gurobi 9.0, for SAT calls MiniSat 2.2 and for MaxSAT calls RC2 as bundled with PySAT (version 0.1.6.dev11). In the MUS-based approach we used PySAT's deletion-based MUS extractor MUSX~\cite{marques2010minimal}.

All of our experiments were run on a direct translation to PySAT of the 10 puzzles of \citet{ecai/BogaertsGCG20}\footnote{In one of the puzzles, an error in the automatic translation of the natural language constraints was found and fixed.}. 
We used a cost of 60 for puzzle-agnostic constraints; 100 for puzzle-specific constraints; and cost 1 for facts.
When generating an explanation sequence for such puzzles, the unsatisfiable subset identifies which constraints and which previously derived facts should be combined to derive new information. 
Our experiments are designed to answer the following research questions: 
\begin{description}
 \item[Q1] What is the effect of requiring optimality of the generated MUSs on the \textbf{quality} of the generated explanations? 
 \item[Q2] Which \textbf{domain-specific \grow methods} perform best?
 \item[Q3] What is the effect of the use of \textbf{constrainedness} on the time required to compute an explanation sequence?
 \item[Q4] Does \textbf{re-use} of information across the different iterations improve efficiency?
\end{description}


\begin{figure}[t]
	\centering
	\includegraphics[width=0.8\columnwidth]{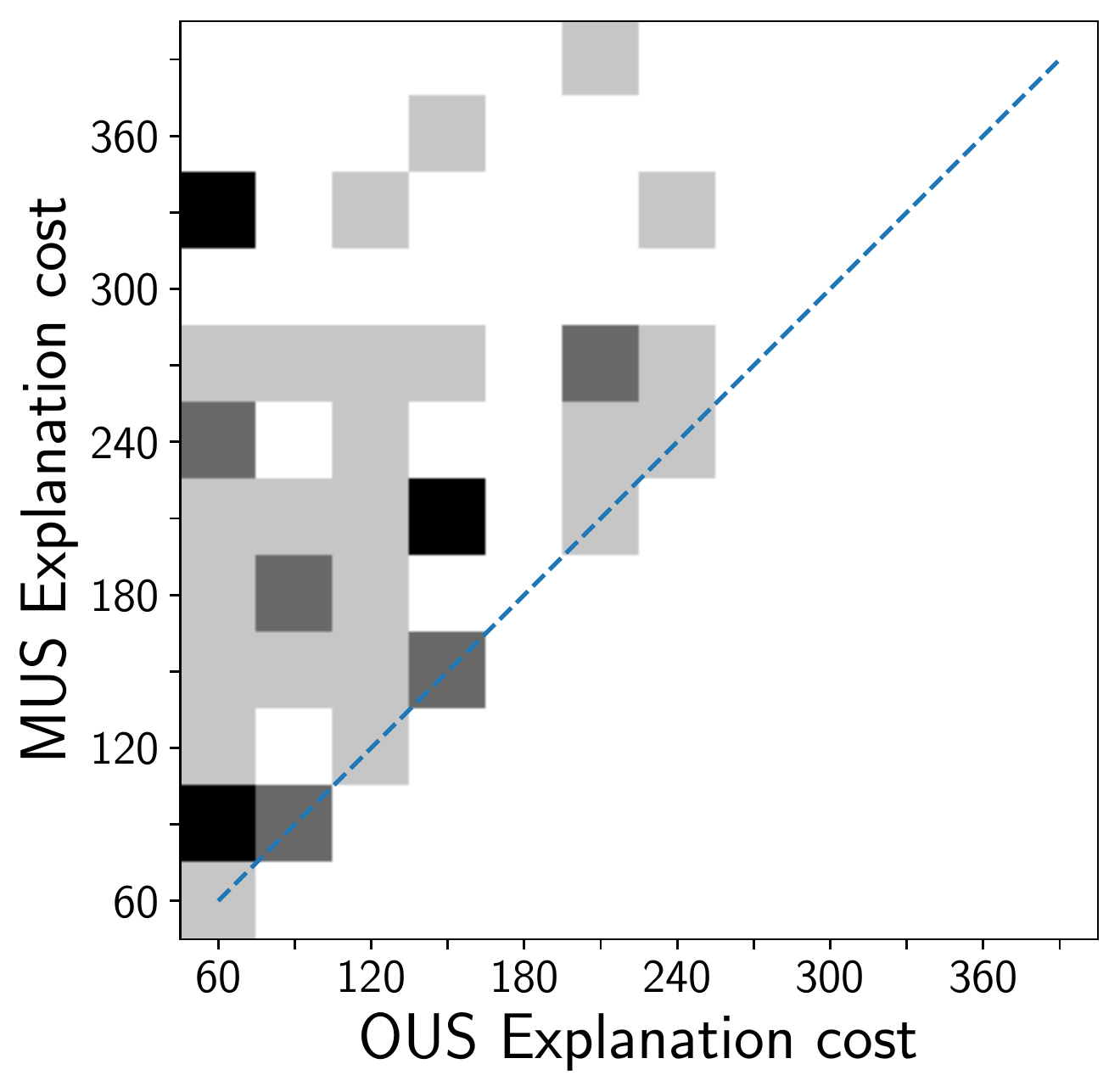}
	\caption{Q1 - Explanation quality comparison of optimal versus subset-minimal explanations in the generated puzzle explanation sequences.}
	\label{fig:rq1_heatmap}
\end{figure}

\paragraph{Explanation quality.}\label{paragraph:explanationquality}
To evaluate the effect of optimality on the quality of the generated explanations, we reimplemented a MUS-based explanation generator based on \cref{alg:oneStep}. 
Before presenting the results, we want to stress that this is \emph{not} a fair comparison with the implementation of \citet{ecai/BogaertsGCG20}, since there -- in order to avoid the quality problems we will illustrate below --  an extra inner loop was used that employs \emph{even more} calls to \call{MUS} for a selected set of subsets of \formulac of increasing size. 
While this yields better explanations, it comes at the expense of computation time, thereby leading to several hours to generate the explanation of a single puzzle. 

To answer \textbf{Q1}, we ran the \call{MUS}-based algorithm as described in \cref{alg:oneStep} and compared at every step the cost of the produced explanation with the cost of the optimal explanation. 
%
These costs are plotted on a heatmap in Figure \ref{fig:rq1_heatmap}, where the darkness represents the number of occurrences of the combination at hand. 
We see that the difference in quality is striking in many cases, with the MUS-based solution often missing very cheap explanations (as seen by the two dark squares in the column around cost 60), thereby confirming the need for a cost-based \omus/\comus approach.

\paragraph{Domain-specific \grow.} 
In our OCUS algorithm, we do not just aim to find any satisfiable subsets, but we prefer \emph{high quality} satisfiable subsets: subsets that impose strong constraints on the assignments the optimal hitting set solver can find. 
This induces a trade-off between \emph{efficiency} of the \grow strategy and \emph{quality} of the produced satisfiable subset.


Thus, to answer \textbf{Q2}, we compared variations of  \comus that only differ in which \grow strategy they use. 
Figure \ref{fig:grow_strategies} depicts the average (over all the puzzles) cumulative explanation time to derive a number of literals. Note that most puzzles only contain 150 literals, except for 2, which contain 96 and 250 literals respectively. When a method times out for a puzzle at one step, a runtime value of 7200 is used in computing the averages for all future steps.
The configurations used are as follows:
\begin{itemize}
\item \emph{Max} refers to growing with a MaxSAT solver and \emph{Greedy} to growing using a heuristic method implemented by repeated sat calls, while 
\emph{no-grow} refers to skipping the \grow step.
\item  \emph{Full} refers to using the full unsatisfiable formula $\mathcal{F}$  while \emph{Actual} refers to using only the constraints that hold in the final interpretation (see Section~\ref{sec:ocusEx}). For instance, for the MaxSAT-based calls, \emph{Actual} means that only the previously derived facts and the original constraints are taken into account when computing optimality. 
\item The \maxsat solver (\emph{Max}) is combined with different weighing schemes: uniform weights (\emph{unif}), cost-function weights (\emph{pos}) (equal to the weights in $f$),  or the inverse of these costs (\emph{inv}) defined as $\max_j(w_j) + 1 - w_i$.
\end{itemize}

We can observe that not using a grow strategy performs badly, as do weighted MaxSAT grows with costs $w_i$ (-Pos). Greedy grow strategies improve on not using a grow strategy, but not substantially. The two approaches that work best use the domain-specific knowledge of doing a MaxSAT grow on $\formulac\cup I$, with the unweighted variant the only one that never times out.
\begin{figure}[t]
	\centering
	\includegraphics[width=\columnwidth]{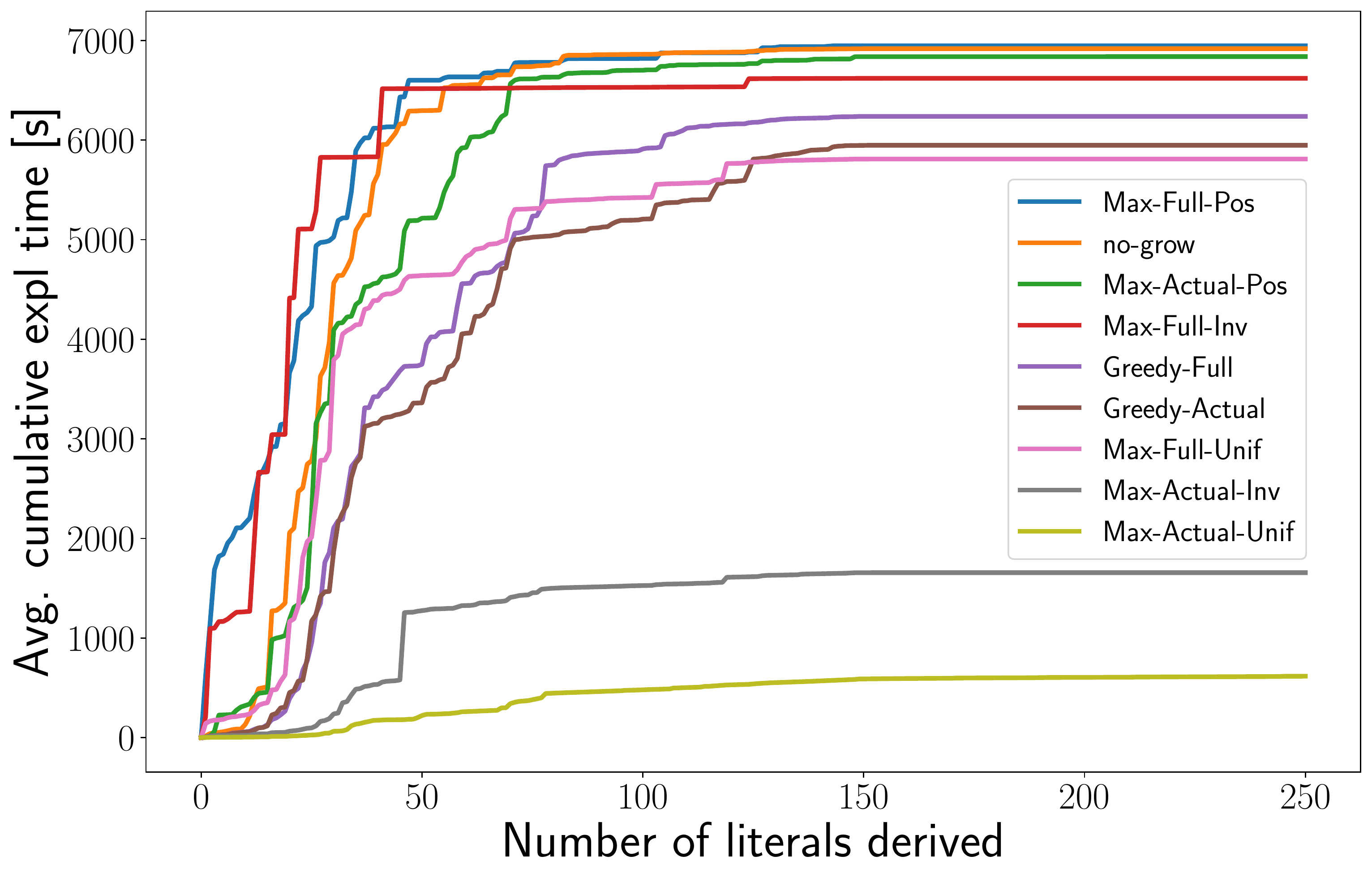}
	\caption{Q2 - Explanation specific \grow strategies for \comus.}
	\label{fig:grow_strategies}
\end{figure}

\paragraph{Constrainedness and incrementality.}
To answer \textbf{Q3} and \textbf{Q4}, we compare the effect of constrainedness in the search for explanations (C) and incrementality. Next to OCUS, we also include the bounded OUS approach (OUSb), where we call the OUS algorithm for every literal in every step, but we reuse information by giving it the current best bound $f(X_{best})$ and iterating over the literals that performed best in the previous call first. Based on the previous experiment, we always use (both for OUSb and OCUS) \emph{Max-Actual-Unif} as grow strategy.
For \emph{OCUS}, incrementality (\emph{+Incr.~HS}) is achieved by reusing the same incremental MIP hitting set solver throughout the explanation calls, as explained in Section~\ref{sec:ocusEx}.  
To have a better view on how incrementality also affects OUSb, we add incrementality to it in the following ways:
\begin{itemize}
	\item \emph{SS.~caching} keeps track of the satisfiable subsets, which are used to initialize \setstohit for a fresh hitting set solver instance each time.
	\item \emph{Lit.~Incr.~HS} uses a separate incremental hitting set solver for every literal to explain, throughout the explanation calls. Once the literal is explained, the hitting set solver is discarded.
\end{itemize}
%
Figure \ref{fig:incrementality_constraindness} shows the results, where 
In this figure, 
the configurations are compared in a similar fashion to Figure \ref{fig:grow_strategies}.

\begin{figure}[t]
	\centering
	\includegraphics[width=\columnwidth]{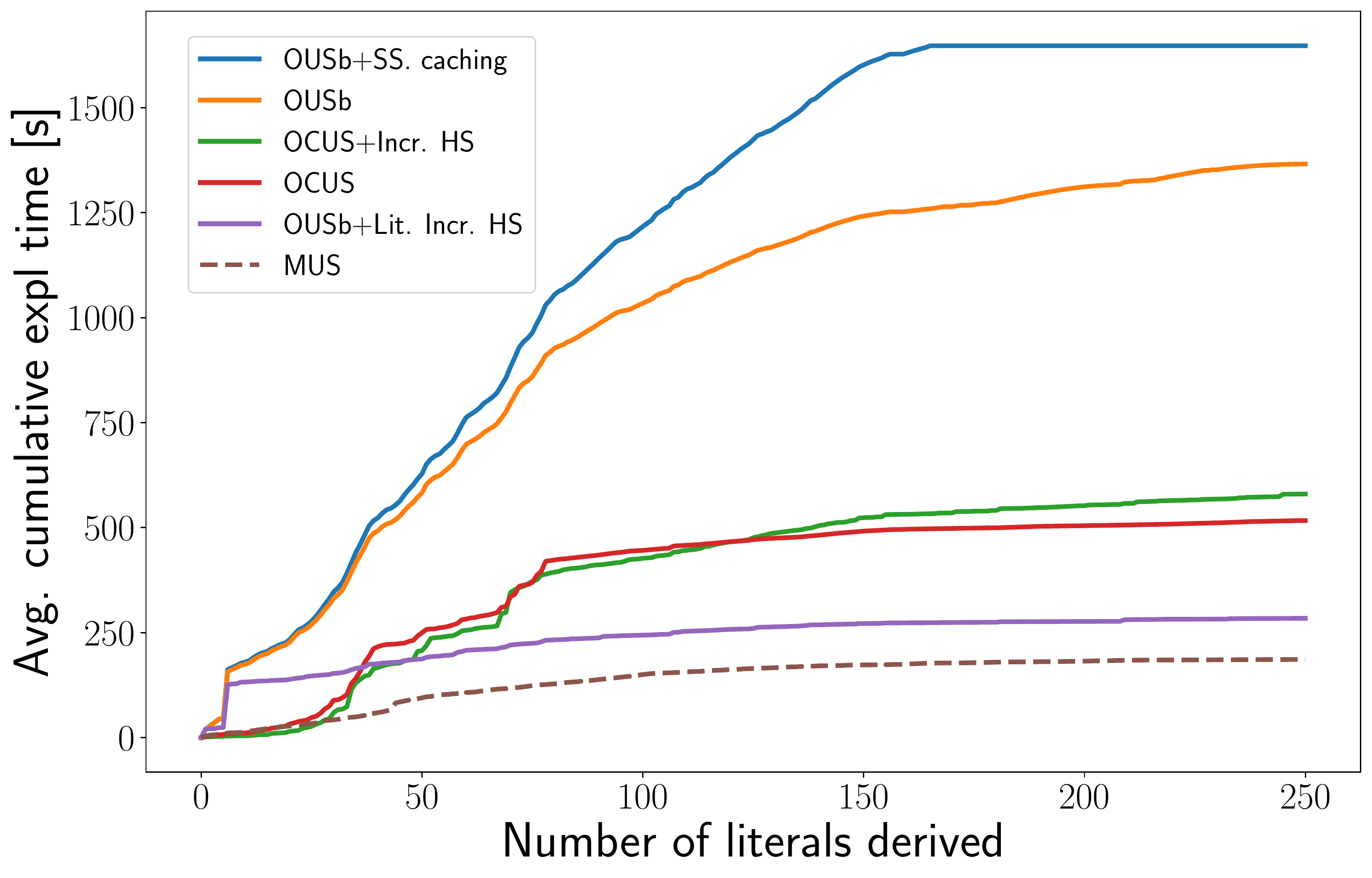}
	\caption{Q3 - Cumulative runtime evolution enhancements on incrementality and constrainedness.}
	\label{fig:incrementality_constraindness}
\end{figure}

When comparing the configurations, we see that the plain \emph{MUS}-based implementation is faster than the \emph{O(C)US} implementations, as it solves a simpler problem (with worse quality results as shown in \textbf{Q1}). 
Replacing MUS by bounded OUS calls (orange line) leads to a much larger computation cost. The generic SS caching technique adds additional overhead. 

The OCUS variants significantly improve runtime compared to those two bounded OUS approaches, by reducing the number of OUS calls. For OCUS, using an incremental hitting set solver across all steps seems to be slightly faster for deriving literals earlier in the sequence, while inducing a small overhead for the literals at the end of the sequence.

When looking at the runtime to explain the entire sequence, best results are obtained with OUSb + Lit. Incr. HS, that is, using an incremental hitting set solver \textit{for every individual literal to explain}. However, for the first literals, we can see that it takes much more computation time and that reducing the number of OUS calls from $n$ to $1$ per explanation step improves runtime (OCUS). However, making each of the $n$ calls incremental and bounded across the entire explanation sequence generation process leads to an even faster process overall (OUSb+Lit. Incr. HS).

\ignore{\color{OliveGreen} old results to be removed
\begin{table}[ht]
  \centering
  \begin{tabular}{r||c|c|c|c|c}
      \textbf{p} & \textbf{MUS} & \textbf{OUS}  & \textbf{OUS+I} & \textbf{\comus} & \textbf{\comus+I} \\
      \hline
      1 &       569 &         4114 &     4727 &           803 &         \textbf{299} \\
      2 &       438 &         3834 &     3972 &           607 &         \textbf{238} \\
      3 &       477 &         4220 &     4938 &           932 &         \textbf{607} \\
      4 &       624 &         3508 &     4820 &           388 &          \textbf{97} \\
      5 &      3382 &         Timeout &     Timeout &          3556 &        \textbf{1537} \\
      6 &       568 &         3849 &     3854 &           498 &         \textbf{155} \\
      7 &       372 &         4411 &     4380 &           685 &         \textbf{414} \\
      8 &       474 &         4679 &     5552 &           669 &         \textbf{448} \\
      9 &       766 &         Timeout &     Timeout &          2383 &        \textbf{1135} \\
      p &       224 &         2601 &     2528 &           651 &         \textbf{537} \\
    \end{tabular}
    \caption{Computation time (s) compared between executions}
    \label{table:computationTime}
  \end{table}
}

\ignore{
We now experimentally validate the the performance of the different versions of our algorithms for explaining satisfiable constraint satisfaction problems.

We consider the following benchmarks: CNF instances from the SATLIB problems Benchmark \cite{hoos2000satlib} and a CNF encoding of the logic grid puzzle ``Origin'' of \cite{ecai/BogaertsGCG20}. All code was implemented in Python on top of 
PySAT.\footnote{\url{https://pysathq.github.io}} The MIP solver used is Gurobi 9.0 and when a (Max)SAT solver is used it is RC2 as bundled with PySAT. Experiments were run on a Intel(R) Xeon(R) CPU E3-1225 with 4 cores and 32 Gb memory, running linux 4.15.0.

Based on the theoretical findings of the previous sections, we aim to answer the following research questions:
\begin{compactdesc}
\item[RQ1] what is the effect of postponing optimal hitting set computation, of incremental OUS solving and of pre-seeding \satsets when solving multiple variants of the same problem?
\item[RQ2] how do the different variants of \omus perform when explaining an elaborate constraint satisfaction problem?
\ignore{
\item[RQ3] how do the sequences found when using (constrained) \omus search compare to those found using a heuristic MUS approach?
}
\end{compactdesc}

\paragraph{RQ1}
To answer the first research question, we use 10 CNF instances from the SATLIB Benchmark and randomly choose 10 literals that are entailed by the CNF. For each variant of the algorithm, we compute the OUS of the same 10 literals in the same order within a total time limit of 10 minutes. 
We compare the following enhancements options to the basic \omus algorithm: postponing optimization (+P), incrementality by reusing satisfiable subsets between \omus calls (+I), and pre-seeding $\satsets$ as described in Section~\ref{sec:incremental} (+W). Options can be combined, for example {\omus}+IPW characterises running the \omus algorithm postponing the optimization phase, with incrementality between the successive calls, and warm starting (pre-seeding) with satisfiable subsets of the original CNF formula.
The results can be seen in Table \ref{table:experiment1} and can be summarized as follows: p, nv and nc represent the instance name, the number of variables and the number of clauses respectively. 
Only for instances aim-50-1\_6-yes1-4, par8-2.cnf and zebra\_v155\_c1135.cnf, is the algorithm able to complete the search for OUSs on the 10 decision variables within the required time constraint of 10 minutes.
For these instances, the overall winner is \emph{{\omus}+IPW}. 
All variants time out on the larger instances (par8-5, par16-1, par16-2, par16-3, par16-4-c, par16-4, hanoi4) before finding the OUSs for all 10 decision variables. For instance par8-5, all variants are able to find 6 out of the 10 variables. On all instances that timed-out, \emph{{\omus}+IPW} remains the fastest. Similar results are observed for the remaining instances for all variants par16-* and hanoi4 with the \omus found for only 1 variable.

A further analysis of the overall execution times highlights that much time is spent in the \grow procedure, for which we start from the partial assignment found by the SAT check and use the RC2 MaxSAT solver to complete it. 
We reran the same experiments with a greedy \grow algorithm instead and observed that \omus is not even able to finish for zebra\_v155\_c1135 and all runtimes increase considerably.
Furthermore, we see that in this case postponing the MIP call effectively redistributes 50\% of the computational load to growing \satsets and the remaining 50 \% are evenly distributed between (i) the SAT solver, (ii) the MIP solver, and (iii) the the greedy and incremental hitting set heuristics. Hence, while the portion of time spent growing satisfiable subsets is reduced, much more iterations are needed to find the optimal OUSs. 

From this experiment we conclude that in the short time limit provided, the best configuration for computing multiple related OUS's is \emph{{\omus}+IPW}, taking advantage of the repeated calls to the OUS algorithm, thus reusing the computed \satsets.

\paragraph{RQ2}
The second research question is: how do the different variants perform when explaining an elaborate constraint satisfaction problem? For this, we tested the complete generation of an explanation sequence. 
In this comparison, we expect that constrained versions of our algorithm perform best as they will allow performing an entire step of the explanations in a single call.
For this reason, we only include different variations on the constrained configuration and the single best variant of non-constrained algorithms found in the previous experiment. 


We generate the explanation sequence as far as possible within a time limit of one hour. 
The results for the  ``origin'' puzzle is shown in Figure~\ref{fig:exp2}.
It shows the number of literals explained on the X-axis, and the cumulative time taken on the Y-axis. 
Only three configurations find the full explanation sequence (note that there can be multiple optimal sequences with a different length, which explains the difference in length between the configurations).
We  see that the best non-constrained implementation is unable to explain all of the literals within the time limit; especially around step 95 there is a big jump in runtime. The vanilla constrained-OUS approach is not able to finish in time either, with big jumps in time on specific (large and costly) explanation steps.

When combining constrained-OUS with either pre-seeding, post-poned optimisation or both, then our approach is able to fully explain the solution. Best results are obtained with constrained-OUS with just pre-seeding at the beginning. The post-poned optimisation in this case may spent a lot of time generating MCSs that are not or little relevant to the constrained OUSs we are seeking. 

\paragraph{Concluding notes}
While a direct comparison of the runtime needed to find an explanation sequence of our tool versus the one of \citet{ecai/BogaertsGCG20} 
would shed more light on the performance impact, we can not do a fair comparison as the solvers and hardware used are different.

However, the authors reported that explaining a single puzzle easily takes one to two hours due to the many MUS calls. In contrast, Figure~\ref{fig:exp2} shows that three of our constrained-OMUS approaches fully explain a puzzle one of their larger puzzles in 20 to 30 minutes.
Furthermore, our algorithms guarantee that each explanation step is \emph{optimal} with respect to $f$. As such we know that the generated sequences are at least as good for the cost function provided.

}

\section{Related Work}\label{sec:related}

In the last few years, driven by the increasingly many successes of Artificial Intelligence (AI), there is a growing need for \textbf{eXplainable Artificial Intelligence (XAI)}~\cite{miller2019explanation}.
In the research community, this need manifests itself through the emergence of (interdisciplinary) workshops and conferences on this topic~\cite{xai-ijcai,FAT} and American and European incentives to stimulate research in the area~\cite{gunning2017explainable,hamonrobustness,fetproact}. 

While the main focus of XAI research has been on explaining black-box machine learning systems \cite{lundberg2017unified,guidotti2018survey,ignatiev2019abduction}, also model-based systems, which are typically considered more transparent, are in need of explanation mechanisms. 
Indeed, by advances in solving methods in research fields such as constraint programming \cite{fai/Rossi06} and SAT \cite{faia/2009-185}, as well as by hardware improvement, such systems now easily consider millions of alternatives in short amounts of time. 
Because of this complexity, the question arises how to generate human-interpretable explanations of the conclusions they make. 
Explanations for model-based systems have been considered mostly for explain \textit{unsatisfiable} problem instances~\cite{junker2001quickxplain}, and have recently seen a rejuvenation in various subdomains of constraint reasoning \cite{fox2017explainable,vcyras2019argumentation,chakraborti2017plan,ecai/BogaertsGCG20}.

In this context, we recently introduced \emph{step-wise explanations} \cite{ecai/BogaertsGCG20} and applied them to Zebra puzzles; similar explanations, but for a wider range of puzzles, have been investigated by \citet{schotten}. 
Our current work is motivated by a concrete algorithmic need: to generate these explanations efficiently, we need algorithms that can find optimal MUSs with respect to a given cost function, where the cost function approximates human-understandability of the corresponding explanation step. 
The closest related works can be found in the literature on generating or enumerating MUSs \cite{conf/sat/LynceM04,liffiton2016fast}.
Different techniques are employed to find MUSs, including  manipulating resolution proofs produced by SAT solvers \cite{goldberg,DBLP:journals/fmsd/GershmanKS08,DBLP:conf/sat/DershowitzHN06}, incremental solving to enable/disable clauses and branch-and-bound search \cite{DBLP:conf/dac/OhMASM04}, or by BDD-manipulation methods \cite{huang}.
Other methods work by means of translation into a so-called Quantified \maxsat \cite{DBLP:journals/constraints/IgnatievJM16}, a field that combines the expressivity of Quantified Boolean Formulas (QBF) \mycite{QBF} with optimization as known from \maxsat \mycite{DBLP:series/faia/LiM09}, or by exploiting the so-called hitting set duality \cite{ignatiev2015smallest} bootstrapped using MCS-enumeration \cite{marques2020reasoning}. 
An \textit{abstract} framework for describing \hitsetbased algorithms, including optimization was developed by \citet{DBLP:conf/kr/SaikkoWJ16}. While our approach can be seen to fit the framework, the terminology is focused on MaxSAT rather than MUS and would complicate our exposition.
To the best of our knowledge, only few have considered \emph{optimizing} MUSs: the only criterion considered yet is cardinality-minimality \cite{conf/sat/LynceM04,ignatiev2015smallest}. 

\ignore{
Our paper builds on the algorithm of \citet{ignatiev2015smallest}, which fits in a general class of so-called \emph{implicit hitting set algorithms}.
While these algorithms find their root in early work of \citet{ai/Reiter87}, they only really boosted in popularity when applied in the context of \maxsat solving \cite{DBLP:conf/cp/DaviesB11,DBLP:conf/sat/DaviesB13,davies}, where \hitsetbased solvers are often among the best solvers in the competitions. 
}



\section{Conclusion, Challenges and Future work}\label{sec:conclusion}
We presented a \hitsetbased algorithm for finding \textit{optimal constrained} unsatisfiable subsets, with an application in generating explanation sequence for constraint satisfaction problems.
We extended our methods with \emph{incrementality}, as well as with a \emph{domain-specific method for extending satisfiable subsets (\grow)}. 
This domain-specific \grow method was key to generating explanation sequences in a reasonable amount of time.
We noticed that, independently, incrementality and constrainedness have major benefits on explanation-generation time. The best method on the tested puzzles was the incremental, bounded but non-constrained variant. 
It remains an open question how to make constrainedness and incrementality work together more effectively, as well as how to further optimize O(C)US-based explanations, for instance using disjoint MCS enumeration \cite{marques2020reasoning}.

With the observed impact of different `\grow' methods, an open question remains whether we can quantify precisely and in a generic way what a \textit{good} or even the best set-to-hit is in a hitting set approach. 
The synergies of our approach with the more general problem of QMaxSAT \cite{DBLP:journals/constraints/IgnatievJM16} is another open question.


The concept of bounded (incremental) OUS and OCUS are not limited to explanations of satisfaction problems and we are keen to explore other applications too.
A general direction here are explanations of \textit{optimisation} problems and the role of the objective function in explanations.
\section*{Acknowledgments}
This research received partial funding
from the Flemish Government (AI Research Program); the FWO Flanders project G070521N; and funding from the European Research Council (ERC) under the European Union’s Horizon 2020 research and innovation program (Grant No.~101002802, CHAT-Opt).

{
%
\bibliographystyle{named}
\bibliography{krrlib,ref} 
}

\end{document}